\documentclass{article} % For LaTeX2e
\usepackage{iclr2027_conference,times}

\usepackage[utf8]{inputenc} % allow utf-8 input
\usepackage[T1]{fontenc}    % use 8-bit T1 fonts
\usepackage{hyperref}       % hyperlinks
\usepackage{url}            % simple URL typesetting
\usepackage{booktabs}       % professional-quality tables
\usepackage{amsfonts}       % blackboard math symbols
\usepackage{nicefrac}       % compact symbols for 1/2, etc.
\usepackage{microtype}      % microtypography
\usepackage{xcolor}         % colors

\usepackage{amsmath}
\usepackage{algorithm}
\usepackage{algpseudocode}
\usepackage{graphicx}
\usepackage{multirow}
\usepackage{wrapfig}
\usepackage{makecell}

\usepackage{amssymb}
\newtheorem{theorem}{Theorem}[section]

\newtheorem{lemma}[theorem]{Lemma}

\newtheorem{proposition}{Proposition}[section]

\usepackage{listings}
\usepackage{xcolor}

\lstdefinestyle{promptstyle}{
  basicstyle=\ttfamily\footnotesize,
  backgroundcolor=\color{gray!10},
  frame=single,
  breaklines=true,
  columns=fullflexible,
  showstringspaces=false
}

\newcommand{\ie}{\textit{i}.\textit{e}.}

\usepackage{amsmath,amsfonts,bm}

\def\eqref#1{equation~\ref{#1}}
\def\1{\bm{1}}

\DeclareMathAlphabet{\mathsfit}{\encodingdefault}{\sfdefault}{m}{sl}
\SetMathAlphabet{\mathsfit}{bold}{\encodingdefault}{\sfdefault}{bx}{n}

\DeclareMathOperator*{\argmin}{arg\,min}

\usepackage{hyperref}
\usepackage{url}

\title{Cooperative Multi-Agent Vision-Language-Action Models via Reinforced Fine Tuning}

\author{
\textbf{
Ruixiao Xu$^{1}$,
WONG Lik Hang Kenny$^{2}$,
Zhiqian Liu$^{1}$,
Jianing Guo$^{1,3}$,
Hanxiao Li$^{1}$, } \\
\textbf{
Kejian Shi$^{2}$,
Shuning Zhang$^{4}$,
Pu Feng$^{5}$,
Yongjia Ma$^{6}$,
Yuqing Ma$^{1}$,
Kai Chen$^{2}$,
Qi Dou$^{2}$, } \\
\textbf{
Yaodong Yang$^{3,7}$,
Xianglong Liu$^{1,5}$,
Simin Li$^{1,2}$ \thanks{Corresponding Author. E-mails: lisiminsimon@buaa.edu.cn}
}
\\[1em]
$^{1}$Beihang University,
$^{2}$The Chinese University of Hong Kong, 
$^{3}$PKU-Psibot Lab, 
\\
$^{4}$Tsinghua University, 
$^{5}$Zhongguancun Laboratory, 
$^{6}$Li Auto Inc., 
$^{7}$Peking University
}

\iclrfinalcopy % Uncomment for camera-ready version, but NOT for submission.
\begin{document}

\maketitle

\begin{abstract}
We study reinforcement learning (RL) methods for cooperative multi-agent Vision-Language-Action (VLA) models. This problem is challenging because VLAs are pretrained on large-scale single-agent data and therefore lack the fine-grained coordination skills required for inter-robot collaboration. Supervised fine-tuning (SFT) on multi-robot demonstrations partially bridges this gap, but its performance is bounded by the demonstration data and cannot improve from its own experience. We present a three-stage reinforced fine-tuning (RFT) pipeline for multi-agent VLAs. First, initialization-aware data collection sweeps over initial configurations and invokes human demonstrations only when the pretrained VLA repeatedly fails, yielding robustness to initialization shift with reduced human cost. Second, offline credit-filtered tuning assigns credit to individual agents and fine-tunes on per-agent trajectories with positive advantage rather than on entire joint rollouts. Third, we find existing online RL for VLAs are less effective for hard multi-agent tasks, which we attribute to noisy co-exploration and unstable updates. We instead use online latent-space fine tuning, which freeze the VLA and perform RL in its latent noise space. 
We evaluate our multi-agent VLA with both $\pi_0$ and $\pi_{0.5}$ backbones across 11 tasks in RoboTwin, RoboFactory and real-world manipulation with two Franka robots. Our multi-agent VLA improves the average success rate by $+23.1\%$, $+16.4\%$, and $+44\%$ on RoboTwin, RoboFactory, and real-world tasks, respectively. Code available at \url{https://anonymous.4open.science/r/mavla_rft-2BC0/}.
\end{abstract}

\section{Introduction}
\label{sec:introduction}

% 1. VLA很通用，但多机协作这块还没有探索，限制了他的应用范围

% 2. multi-agent VLA很难，因为训练在单机上面做，没法scale up. 现有方法只解决了SFT问题，没解决RFT问题

% 3. （一个很短的段落）我们做了个pipeline，包含三步，实现了multi-agent VLA的RFT

% 4. 展开三个方法。为了解决xxx问题，我们提出xxx方法，实现了xxx效果

Vision-language-action (VLA) models have rapidly emerged as a general interface between visual perception, language understanding, and low-level robotic control, mastering diverse single-agent tasks within a single policy \citep{ghosh2024octo,kim2024openvla,black2024pi0,physicalintelligence2025pi05}. However, real-world robot manipulation often require robots to collaborate with others, such as multi-arm assembly \citep{zhang2026dexora} and multi-robot household manipulation \cite{chen2025robotwin2}. Despite the advances in single-agent tasks, such performance do not transfer to multi-agent setting automatically, limiting the applicability of VLAs.

Extending VLAs to multi-agent setting is nontrivial. VLAs are typically pre-trained on web-scale single-agent data to acquire general-purpose manipulation skills. However, multi-agent cooperation requires fine-grained low-level coordination skills between robots, which is not covered in the single-agent pretraining dataset. Existing works in vision-language model (VLM) domain focus on decomposing multi-agent task to single-agent plans \citep{mandi2023roco,liu2024coherent,qin2025robofactory,tan2025roboos,chen2025robotwin2}. However, their low-level execution either relies on rule-based policy or assigned to existing VLAs, overlooking the fine-grained low-level action details required for VLA cooperation. Recently, CHORUS \citep{doshi2026chorus} demonstrate the feasibility of multi-VLA cooperation by supervised fine-tuning, yet its performance is inherently bounded by demonstration data and cannot further improve from its own experience.

In this paper, we present a three-stage recipe for reinforced fine-tuning of multi-agent VLAs, consisting of data collection, offline RL, and online RL, which together bridge the gap between single-agent and multi-agent VLAs. First, we propose initialization-aware data collection to ensure that the pretrained single-agent VLA possesses the basic skills required for multi-agent cooperation. The key insight is that subsequent RL training is only possible if the pretrained VLA achieves a non-zero success rate across different environment initializations. For initializations where the VLA succeeds at least occasionally, we collect both successful and failed trajectories for further tuning. For initializations where the VLA never succeeds, we conclude that the single-agent VLA lacks the primitive skills required for that configuration, and we invoke human demonstrations for these cases only, thereby inducing acceptable human effort. Together, this process yields an offline dataset with broad coverage of initial conditions with non-zero success rate, at reasonable human cost.

The second stage uses offline RL to reinforce behaviors from successful trajectories and penalize failure cases. A fundamental challenge here is credit assignment across agents~\citep{rashid2020monotonic}. In a multi-agent rollout, overall task failure does not imply that every agent behaved poorly, since one agent may have executed its subtask flawlessly while another caused the failure. Consequently, fine-tuning on entire joint rollouts may credit individual agents improperly, either penalizing desirable individual behavior in failed rollouts or rewarding flawed individual behavior in successful ones. To address this, we propose offline credit-filtered tuning, which estimates a per-agent advantage capturing the contribution of each agent to the team outcome, and fine-tunes the VLA only on per-agent trajectory with positive advantage, ensuring that only desirable behaviors are reinforced.

The third stage further improves the VLA via online RL. We find that existing online RL methods for single-agent flow-matching VLAs~\citep{chen2026pirl} can be unstable in hard multi-agent tasks with low success rates, a failure we attribute to noisy co-exploration and unstable concurrent updates. First, these methods explore by adding independent noise in raw action space of each agent, which disrupt precise coordination and collapse the learning signal. Second, the intractable marginal likelihood of flow matching forces high-variance surrogate gradients over the denoising chain. While tolerable for a single agent, this variance is amplified by the non-stationarity of concurrent multi-agent updates. To overcome these issues, we instead perform online RL in the latent noise space of the flow-matching policy, using DSRL~\citep{wagenmaker2025dsrl}. Rather than modifying VLA weights, we freeze them and induce diverse behaviors by steering the low-dimensional latent noise via online RL, so exploration remains confined to in-distribution behaviors, which yields stable training and further improves performance through trial and error in regime with low success rates.

% First, these methods explore by injecting stochasticity into the denoising process in raw action space; performed independently by multiple agents, such uncoordinated perturbations disrupt precise coordination and collapse the learning signal. Second, as the marginal action likelihood of flow matching is intractable, policy gradients must be computed through a surrogate sampler over the denoising chain, yielding high-variance updates, which is tolerable for a single agent, but amplified under multi-agent non-stationarity, where concurrent updates continually shift each agent's effective environment. 

Together, the three steps gives the first reinforced fine tuning recipe for multi-agent VLAs. 
% Empirically, our multi-agent VLA achieves end-to-end low-level control with xxx\% success on RoboTwin and xxx\% on RoboFactory, and these gains persist in real-world deployment on cooperative tasks with Franka robots.
We evaluate our framework with both $\pi_0$ and $\pi_{0.5}$ backbones across 11 tasks in RoboTwin, RoboFactory and real-world tasks with two Franka robots, covering varying agent numbers from 2 to 4. Our multi-agent VLA improves the average success rate by $+23.1\%$, $+16.4\%$, and $+44\%$ on RoboTwin, RoboFactory, and real-world tasks, respectively. Ablations show each stage is essential for success and visualize its effectiveness against alternative methods, exploring the potential of RL methods in multi-agent VLAs and critical factors that affects its success.

\textbf{Contributions.} Our contributions are two-folded. First, we propose multi-agent VLA, a three-stage reinforced fine-tuning pipeline for multi-VLA cooperation, spanning data collection, offline fine tuning and online fine tuning. Second, our multi-agent VLA achieves consistent enhancement of VLA collaborative across 11 tasks in RoboTwin, RoboFactory and real-world Franka robots.

\section{Related Work}
\label{sec-related-work}

\textbf{Vision-Language-Action Models.} VLA models provides a practical way of integraing visual perception, language understanding, and robotic control into a unified policy. Early systems such as RT-1~\citep{brohan2022rt1} and RT-2~\citep{brohan2023rt2} demonstrated large-scale multi-task learning and the transfer of web-scale semantic knowledge to physical actions, while Open X-Embodiment~\citep{oneill2023openx}, Octo~\citep{ghosh2024octo}, and OpenVLA~\citep{kim2024openvla} further advanced cross-embodiment pretraining and open generalist robot policies. Recent works increasingly adopt diffusion or flow-matching action decoders to model multimodal and temporally coherent action sequences, including Diffusion Policy~\citep{chi2023diffusionpolicy}, RDT-1B~\citep{liu2024rdt1b}, $\pi_0$~\citep{black2024pi0}, Diffusion-VLA~\citep{wen2024diffusionvla}, DexVLA~\citep{wen2025dexvla}, HybridVLA~\citep{liu2025hybridvla}, and DiT-A~\citep{hou2025dita}. Other models further explore efficient action tokenization~\citep{pertsch2025fast}, open-world generalization~\citep{physicalintelligence2025pi05}, spatial reasoning~\citep{lin2025onetwovla}, world modeling~\citep{cen2025worldvla, zhang2025dreamvla}, and lightweight deployment~\citep{shukor2025smolvla}. However, VLAs are pretrained on single-agent datasets, and lack the skills required for multi-agent cooperation, which motivates our paper.

\textbf{Reinforced Learning for VLAs.} % Reinforcement learning has recently been introduced to improve the capability of VLA policies beyond supervised imitation. Some methods preserve the pretrained policy and use value functions~\citep{nakamoto2024valueguidance}, residual policies~\citep{xu2024rldg}, or guidance mechanisms~\citep{yuan2024policydecorator} to refine its outputs, while others directly fine-tune VLAs through online reinforcement learning~\citep{guo2025openvlaorl}, consistency-policy-based reinforcement fine-tuning~\citep{chen2025conrft}, interactive post-training with sparse task rewards~\citep{tan2025riptvla}, scalable trajectory-level reinforcement learning~\citep{lu2025vlarl}, reward-driven reinforced fine-tuning~\citep{zhang2025reinbot}, temporal-feedback-based policy optimization~\citep{shu2025rftf}, or trajectory-wise group-relative policy optimization~\citep{chen2025tgrpo}. DSRL~\citep{wagenmaker2025dsrl} performs reinforcement learning in the initial-noise space of a frozen diffusion policy, providing stable and parameter-efficient exploration but remaining constrained by the capability of the fixed action decoder. In contrast, $\pi^{*}_{0.6}$~\citep{physicalintelligence2025pistar06} learns a value model from demonstrations, autonomous rollouts, and expert corrections, and uses binary advantage conditions to update the underlying VLA. In our paper, we demonstrate existing works cannot be applied straightforwardly to multi-agent settings, and provide the first RL pipeline for multi-agent VLA.
Reinforcement learning has recently been introduced to improve the capability of VLA policies beyond supervised imitation, including offline and online methods. Offline RL optimize policy from a fixed collection of demonstrations, typically by learning value functions and using them to guide policy optimization ~\citep{nakamoto2024valueguidance, chen2025conrft}. Human demonstrations are also included to correct VLA from mistakes ~\citep{physicalintelligence2025pistar06}. In contrast, online RL assumes continued interaction with environments and improve policy via trial-and-errors. The problem has been widely studied in VLAs with discrete action bins ~\citep{guo2025openvlaorl, li2025simplevlarl,lu2025vlarl,tan2025riptvla,zhang2025reinbot,shu2025rftf,chen2025tgrpo}. For flow matching and diffusion policy, the problem is more challenging since no tractable log-likelihood is available. \citep{xu2024rldg,luo2025serl,luo2024hilserl} explored this problem on small-model backbones, while later works extend it to $\pi$-series VLAs \citep{chen2026pirl,zhang2026reinflow} by approximating action log-likelihood. Besides, DSRL~\citep{wagenmaker2025dsrl} freezes the VLA policy and use RL to optimize the latent noise space, serving as an alternative to online RL methods. In our paper, we demonstrate existing works cannot be applied straightforwardly to multi-agent settings, and provide the first RL pipeline for multi-agent VLA.

\textbf{Embodied Multi-Agent Collaboration.} 
Embodied multi-agent collaboration has advanced rapidly with existing works broadly falling into two categories. The first focuses on high-level collaborative planning, where VLMs decompose a shared task into agent-specific subgoals and allocate them into different agents~\citep{mandi2023roco,kannan2023smartllm,zhang2023coela}, with further mechanisms such as execution feedback, embodiment awareness, and sequential evaluation to improve robust collaboration~\citep{liu2024coherent,chen2024emos,zhang2024read}.
The second line moves beyond symbolic planning toward action-level multi-robot execution. Benchmarks include RoboTwin~\citep{chen2025robotwin2} for bimanual manipulation together with rule-based low-level execution. RoboFactory~\citep{qin2025robofactory} extends the setting to cooperation with compositional constraints and more than two agents. Another line of work leverages VLMs for high-level task scheduling and coordination among multiple embodied agents~\citep{tan2025roboos,sun2025interactgen}, with subsequent works studying sim-to-real transfer~\citep{kang2026coenv}. However, these works rely on rule-based low-level control or simply assign the plan to VLAs, neglecting the fine-grained low-level skills required for multi-robot cooperation. The work most closely related to ours is CHORUS~\citep{doshi2026chorus}, which achieves multi-VLA cooperation via supervised fine-tuning. However, how to further improve the capability of multi-agent VLAs through RL remains largely unexplored.

\section{Preliminaries}
\label{sec:preliminaries}
\textbf{Problem Formulation.} Under the literature of Multi-Agent Reinforcement Learning (MARL) \citep{rashid2018qmix, yu2022surprising}, we formulate our problem as a Markov game \citep{littman1994markovgame}, defined by a tuple $\mathcal{G}=\langle \mathcal{N}, \Psi, \mathcal{S}, \{\mathcal{A}^i\}_{i=1}^N, \mathcal{P}, R, \gamma \rangle$. Here $\mathcal{N}=\{1, ..., N\}$ is the set of $N$ agents, $\Psi$ is space of language instructions, $\mathcal{S}$ is the state space, $\mathcal{A}^i$ is the individual action space, with $\mathcal{A}=\times_{i\in\mathcal{N}} \mathcal{A}^i$ the joint action space. $\mathcal{P}: \mathcal{S} \times \mathcal{A} \rightarrow \Delta(\mathcal{S})$ is the state transition probability. $R: \mathcal{S} \times \mathcal{A} \times \Psi \rightarrow \mathbb{R}$ is the shared reward function, $\gamma \in [0, 1)$ is the discount factor.

At time $t=0$, all agents receive a shared instruction $\psi \in \Psi$. At each timestep $t$, each agent observes $s_t \in \mathcal{S}$ and takes an action chunk $a^i_t \sim \pi^i(\cdot|s_t, \psi)$ drown from its policy, forming a joint action chunk $\mathbf a_t$. The joint policy is $\pi = \prod_{i=1}^N \pi^i$. After that, the environment proceeds to next state following the transition probability $\mathcal P(s_{t+1}|s_{t}, \mathbf{a}_t)$, and yields reward $r_t =  R(s_t, \mathbf a_t, \psi)$ for each agent. For generality, we avoid task-specific rewards and instead adopt a general reward scheme, including a penalty of $-100$ at the end of an episode if the task fails, and a step-wise penalty of $-1$ at each timestep to encourage faster task completion. Let $\rho_0$ be the initial state distribution, the joint objective is $J(\pi) = \mathbb E_{s_0 \sim \rho_0}[\mathbb E_{\pi}[\sum_{t=0}^\infty \gamma^t r_t|s_0, \psi]]$. The corresponding V and Q function is $V_\pi(s) = \mathbb E_{\pi}[\sum_{t=0}^\infty \gamma^t r_t|s_0=s, \psi]$ and $Q_\pi(s, \mathbf a) = \mathbb E_{\pi}[\sum_{t=0}^\infty \gamma^t r_t|s_0=s, \mathbf a_0 = \mathbf a, \psi]$, respectively.

\textbf{Handling Partial Observability.} In practice, each VLA may have local observation as input, rather than global state. The setting is called Decentralized Partially Observable Markov Decision Process (Dec-POMDP) \citep{oliehoek2016decPOMDP}. However, the process does not satisfy the Markov property when each agent only assess to partial observation, and is proven to be NEXP-complete \citep{bernstein2002complexity}. It is therefore standard in MARL to make theoretical analysis under full observability \citep{kuba2022trustregion, zhong2024heterogeneous, zhang2024read}. In practice, we feed camera RGB observations to the VLA backbone directly, following common MARL implementations that use partial observations as policy input \citep{rashid2018qmix,yu2022surprising}.

\textbf{Heterogeneous Agent Reinforcement Learning.} In multi-agent reinforcement learning, a fundamental challenge is to evaluate the contribution of each agent to the joint return, known as \emph{credit assignment} \citep{yang2020overview}. Heterogeneous agent reinforcement learning \citep{kuba2022trustregion,zhong2024heterogeneous} solves this by estimating the agent-wise advantage according to the \emph{multi-agent advantage decomposition} lemma. Agents are then updated sequentially with non-negative advantage, yielding a \emph{monotonic improvement guarantee} on the joint return at every on-policy update.

Formally, let $i_{1:m}$ denote an ordered subset $\{i_1,\ldots,i_m\} \in \mathcal{N}$, and let $-i_{1:m}$ be its complement. The multi-agent Q function is $Q_{\pi}^{i_{1:m}} \left(s, a^{i_{1:m}}\right) \triangleq \mathbb{E}_{\pi^{-i_{1:m}}}
\left[ Q_{\pi} \left( s, a^{i_{1:m}}, a^{-i_{1:m}} \right) \right]$. For disjoint sets $j_{1:k}$ and $i_{1:m}$, the multi-agent advantage function is $A_{\pi}^{i_{1:m}} \left(s, a^{j_{1:k}}, a^{i_{1:m}} \right) \triangleq
Q_{\pi}^{j_{1:k},i_{1:m}} \left(s, a^{j_{1:k}}, a^{i_{1:m}} \right) - Q_{\pi}^{j_{1:k}} \left(s, a^{j_{1:k}} \right)$. Then, we have the following multi-agent advantage decomposition lemma:
\vspace{-0.1in}
\begin{lemma}[\textmd{Multi-Agent Advantage Decomposition \citep{kuba2022trustregion}}]
In any cooperative Markov games, given a joint policy $\pi$, for any state $s$ and any agent subset $i_{1:m}$, the equation holds:
\begin{equation}
\label{eq:multi_agent_advantage_decomposition}
A_{\pi}^{i_{1:m}} \left( s, a^{i_{1:m}} \right) = \sum_{j=1}^{m} A_{\pi}^{i_j} \left( s, a^{i_{1:j-1}}, a^{i_j} \right).
\end{equation}
\end{lemma}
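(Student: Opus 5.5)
The plan is a telescoping argument built directly from the definitions of the multi-agent Q function and the multi-agent advantage given above. The left-hand side $A_{\pi}^{i_{1:m}}(s, a^{i_{1:m}})$ is the case with an empty conditioning set $j_{1:k}=\emptyset$. With the convention $Q_{\pi}^{\emptyset}(s) = \mathbb{E}_{\pi}[Q_\pi(s,\mathbf a)] = V_\pi(s)$, it reads $Q_{\pi}^{i_{1:m}}(s,a^{i_{1:m}}) - V_\pi(s)$. Similarly, the $j$-th summand $A_{\pi}^{i_j}(s, a^{i_{1:j-1}}, a^{i_j})$ is, by definition with $j_{1:k} = i_{1:j-1}$, equal to $Q_{\pi}^{i_{1:j}}(s, a^{i_{1:j}}) - Q_{\pi}^{i_{1:j-1}}(s, a^{i_{1:j-1}})$.

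Two preliminary checks come first. The first is that $Q_\pi^{\emptyset}$ coincides with $V_\pi$. This uses the identity $V_\pi(s) = \mathbb{E}_{\mathbf a \sim \pi(\cdot|s)} Q_\pi(s,\mathbf a)$, which is the standard Bellman consistency between $V_\pi$ and $Q_\pi$ as defined in the Preliminaries. The second is that each $Q_{\pi}^{i_{1:j}}$ is well defined as a function of the prefix of actions only. This is where the product structure $\pi = \prod_{i} \pi^i$ enters: the expectation over $a^{-i_{1:j}}$ factorizes over the remaining agents' independent policies, so conditioning on $a^{i_{1:j}}$ does not change how the other agents' actions are distributed. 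The ordering $i_{1:j}$ matters only for bookkeeping; $Q_\pi^{i_{1:j}}$ depends on the set of agents, not on their order.

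The sum then telescopes:
\begin{equation*}
\sum_{j=1}^{m} \Bigl[ Q_{\pi}^{i_{1:j}}(s, a^{i_{1:j}}) - Q_{\pi}^{i_{1:j-1}}(s, a^{i_{1:j-1}}) \Bigr] = Q_{\pi}^{i_{1:m}}(s, a^{i_{1:m}}) - Q_{\pi}^{i_{1:0}}(s) .
\end{equation*}
Since $Q_{\pi}^{i_{1:0}}(s) = Q_\pi^{\emptyset}(s) = V_\pi(s)$, the right-hand side is exactly $A_{\pi}^{i_{1:m}}(s,a^{i_{1:m}})$, which proves the lemma.

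The only real obstacle is notational rather than mathematical. The paper defines the advantage only for a nonempty conditioning set $j_{1:k}$, so I must state explicitly that the empty-set case is interpreted via $Q_\pi^{\emptyset} = V_\pi$. I must also confirm that $Q_\pi^{j_{1:k}, i_{1:m}}$ with the concatenated index set matches $Q_\pi^{i_{1:j}}$ when $j_{1:k} = i_{1:j-1}$ and the second set is the single agent $i_j$. Once these identifications are made, the argument holds for any cooperative Markov game and any ordering, with no further assumptions beyond integrability of $Q_\pi$. Integrability is immediate because rewards are bounded, being $-1$ per step plus a terminal $-100$, and $\gamma<1$.
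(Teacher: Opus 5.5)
Your proposal is correct and follows the standard argument. The paper does not reprove the lemma but cites \cite{kuba2022trustregion}, whose proof is exactly this telescoping of $Q_\pi^{i_{1:j}} - Q_\pi^{i_{1:j-1}}$ under the boundary convention $Q_\pi^{i_{1:0}} = V_\pi$; the paper states the same convention below Eq.~\ref{eq:offline_value_loss} and uses it in Eq.~\ref{eq:agent_advantage}.
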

\vspace{-0.2in}

\section{Method}
\label{sec:method}
% To improve the task efficiency, we propose a framework that enables MA-VLA systems to learn form their own experience. We first train a sequential value function using offline interaction data. This formulation enables the contribution of each agent to be calculated accurately. Based on that, we develop a two-stage policy optimization procedure. In the offline stage, we derive agent-specific advantage, use it to filter the offline datasets with negative samples and finetune each VLA policy. In the online stage, we follow the latent-space RL learning paradigm of DSRL~\citep{wagenmaker2025dsrl}, while incorporating the decomposed advantage as an auxiliary learning signal.
% In this paper, we present a three-stage recipe for reinforcement fine-tuning of multi-agent VLAs, consisting of data collection, offline RL, and online RL. During data collection, when the VLAs cannot reliably accomplish the task with specified initialization, we introduce expert interventions to provide corrective demonstrations and bootstrap policy improvement. In the offline RL stage, we estimate agent-specific advantages and fine-tune each VLA only on positive-advantage trajectories. In the online RL stage, since direct policy-gradient updates may destabilize the pretrained policy, we adopt DSRL~\citep{wagenmaker2025dsrl} to perform online reinforcement learning in the latent noise space of the flow-matching policy.

In this section, we introduce a three-stage recipe for multi-agent reinforced fine tuning, including data collection for broad distribution coverage, offline fine-tuning to reinforce desired behaviors and online fine-tuning that tunes latent noise space in flow matching to improve from trials and errors. The overall process is illustrated in Fig.~\ref{fig:framework}. The pseudo code is available in Appendix~\ref{sec:pseudo_code}.
\begin{figure*}[!t]
\centering
\includegraphics[width=1\textwidth]{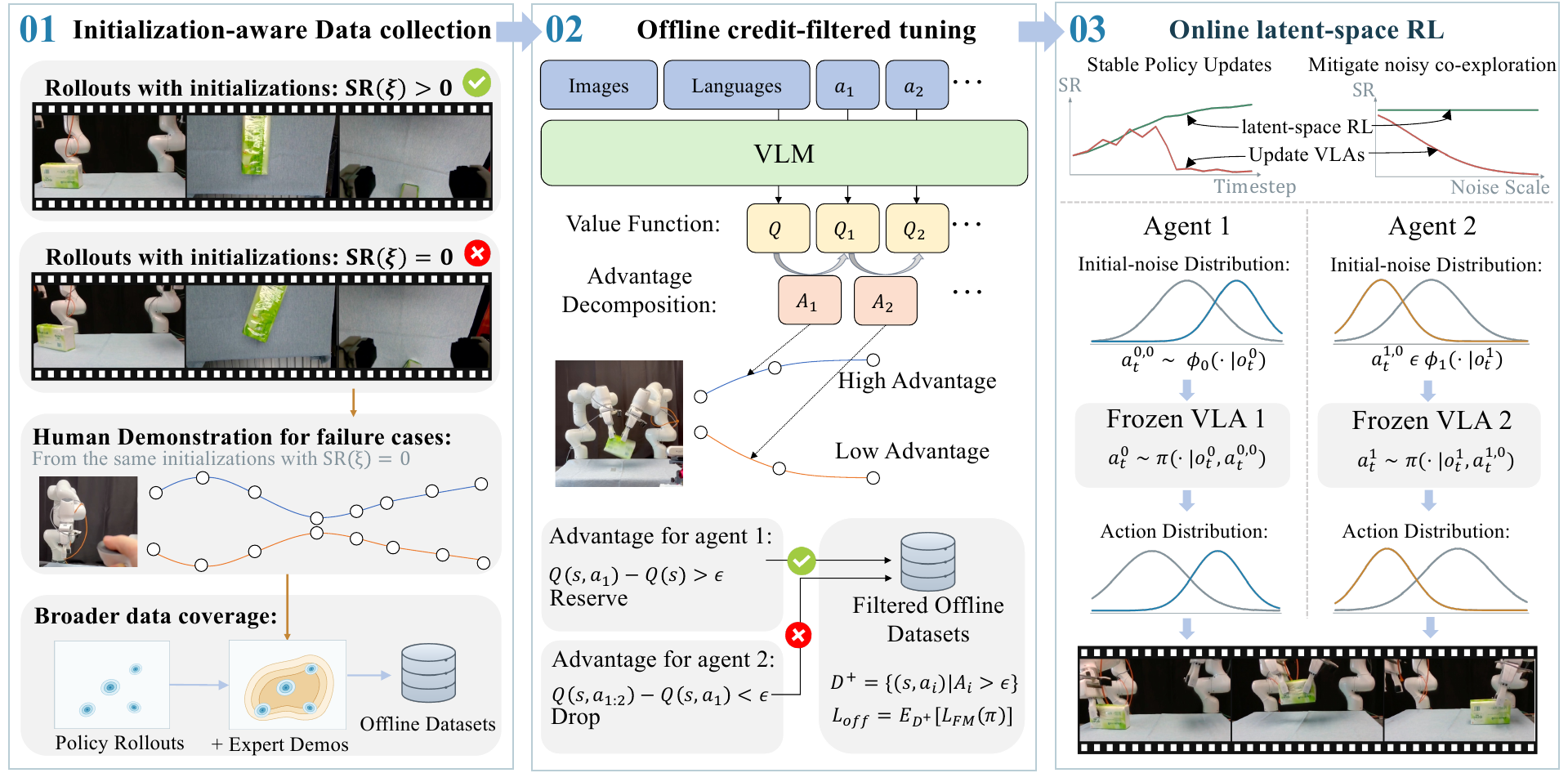}
\vspace{-0.2in}
\caption{The three-stage recipe for reinforcement fine-tuning multi-agent VLAs. We first perform \emph{Initialization-Aware Data Collection} to broaden coverage of environment conditions, followed by \emph{Offline Credit-Filtered Tuning} to decompose agent-wise advantages and reinforce high-contribution behaviors. Finally, we extend DSRL to the multi-agent setting and perform \emph{Online Latent-Space Reinforcement Learning} to enable further improvement through trial and error.}
\vspace{-0.2in}
\label{fig:framework}
\end{figure*}
% TODO: change the pictures in Stage 3.

\subsection{Initialization-Aware Data Collection}
\label{sec:init_aware_collection}

% A general-purposed VLA needs to success under diverse environment configurations, including object poses, robot poses, task layouts, and others. However, current works have demonstrated that existing VLAs are non-robust when these initial conditions varies \citep{wang2025vlatest}. Moreover, in challenging multi-agent settings, VLAs often achieve zero success rate in some initializations, receiving similar negative reward for all policy rollouts that makes reinforcement learning impossible. In such cases, we conclude current VLAs lack the primitive skills under this configuration, and propose an initialization-aware data collection scheme that calls for human demonstrations under such conditions.

A general-purpose VLA must succeed across diverse environment configurations, including object poses, robot poses, and task layouts. However, recent studies have shown that existing VLAs are brittle to variations in these initial conditions \citep{wang2025vlatest}. Moreover, in challenging multi-agent settings, VLAs can achieve a zero success rate under certain initializations. Every policy rollout then receives the same negative reward, rendering reinforcement learning infeasible. We attribute such failures to the lack of the primitive skills for these configurations, and propose an initialization-aware data collection scheme that calls human demonstrations only under such conditions.

% To handle this, let $\xi \in \Xi$ denote an environment initialization. For each initialization $\xi$, we execute the pretrained policy for $K$ episodes and estimate its empirical success rate (SR) as $SR(\xi)$. When $SR(\xi) > 0$, reinforcement learning is possible and we collect these $K$ policy rollouts containing both successful and failed cases for subsequent reinforced fine-tuning. In contrast, when $SR(\xi) = 0$, the pretrained VLA cannot succeed in current initialization. We interpret this as an indication that the current policy lacks the primitive behavior required for this initialization, such that sparse-reward exploration alone is unlikely to bootstrap effective learning within a reasonable interaction budget. We therefore calls for same $K$ human demonstrations for these initializations only, which provide a minimal set of successful behaviors that expands the support of the policy and enables subsequent reinforcement learning to operate in regions that would otherwise contain no positive signal.

Formally, let $\xi \in \Xi$ denote an environment initialization. For each $\xi$, we execute the pretrained policy for $K$ episodes and estimate its empirical success rate $\mathrm{SR}(\xi)$. When $\mathrm{SR}(\xi) > 0$, reinforcement learning is feasible, and we retain these $K$ rollouts containing both successful and failed trajectories for subsequent reinforced fine-tuning. In contrast, $\mathrm{SR}(\xi) = 0$ indicates that the pretrained policy lacks the primitive behavior required for this initialization, and that sparse-reward exploration alone is unlikely to bootstrap effective learning within a reasonable interaction budget. For these initializations only, we calls for human experts for successful demonstrations, giving a minimal set of successful behaviors that expands the support of the policy and enables reinforcement learning to operate in regions that would otherwise yield no positive signal.

Despite being a simple solution, our initialization-aware data collection gathers data with broad coverage of environment conditions, making subsequent reinforcement learning possible. It also strikes a balance between fully autonomous exploration and exhaustive human demonstration collection, invoking human effort only when necessary.

\subsection{Offline Credit-Filtered Tuning}
\label{sec:credit_filtered_tuning}

% Initialization-aware data collection stage provides a mixed offline dataset for training. Next, we use offline RL to reinforce desired behaviors. During the cooperation of multiple VLAs, a challenge in offline learning is the \emph{credit assignment} between agents \citep{rashid2018qmix}. Since the final outcome is a joint cooperation of all agents, it can be hard to evaluate the individual contribution of each agents in a rollout. To reinforce desired agent behavior only, we propose offline credit-filtered tu ning that estimates the contribution of each agent by its individual advantage, and only learn from transitions with positive advantage.

Given the mixed offline dataset produced by initialization-aware data collection, we apply offline RL to reinforce desired behaviors. A key obstacle in the multi-VLA setting is \emph{credit assignment} \citep{rashid2018qmix}. Since task success is a joint outcome of all agents, an individual agent's contribution to a rollout is hard to assess. We therefore propose offline credit-filtered tuning, which quantifies each agent's contribution by its individual advantage and updates the policy only on transitions with positive advantage, thereby reinforcing beneficial behaviors while filtering out the rest.

Formally, given an offline dataset $\mathcal{D}_{\mathrm{offline}}$, we estimate the agent-wise advantage as follows. Let $i_{1:N} = {i_1, i_2, ... i_N}$ an arbitrary fixed decision order of $N$ agents, with $a^i_{1:m} = (a^{i_1}, ..., a^{i_m})$ for the first $m$ actions, with $a^i_0 = \varnothing$. Given the discounted return $G_t = \sum_{t'=t}^{T} \gamma^{t'-t} r_{t'}$ of a trajectory in the dataset, we fit an agent-combined Q function by sequentially regressing onto the same return:
\begin{equation}
\label{eq:offline_value_loss}
\min_{Q} \mathbb E_{\tau \in \mathcal D_{\mathrm{offline}}} \sum_{m=0}^N \left[ H(G_t, Q_\pi^{i_{1:m}}(s_t, \textbf{a}_t^{i_{1:m}})) \right],
\end{equation}
where the two boundary cases are the V function, $Q_\pi^{i_{1:0}}(s_t) = V_\pi(s_t)$, and the joint value function $Q_\pi^{i_{1:N}}(s_t, \textbf{a}_t^{i_{1:N}}) = Q_\pi(s_t, \mathbf a_t)$. Every intermediate $Q_\pi^{i_{1:m}}$ corresponds to the value after revealing the action taken by first $m$ agents, before the rest $N-m$ agents have acted.

The critic is implemented as a transformer that accepts a variable‑length action input, so all $N+1$ terms are learned by a shared network in one pass. Following the distributional value function design of $\pi_{0.6}$ \citep{physicalintelligence2025pistar06}, which yields better empirical results, each Q function outputs a 201-bin distribution over returns, and $H$ is the cross entropy between empirical return $G_t$ and the predicted distribution. Given the learned agent-combined Q function, the advantage of each individual agent is obtained as a telescoping difference between consecutive orders:
\begin{equation}
\label{eq:agent_advantage}
A_{\pi}^{i_m}(s_t, \mathbf a^{i_{1:m-1}}_t, a^{i_m}_t) = Q_{\pi}^{i_{1:m}} \left( s_t, \mathbf a_t^{i_{1:m}} \right) - Q_{\pi}^{i_{1:m-1}} \left( s_t, \mathbf a_t^{i_{1:m-1}} \right), \quad m = 1, 2, ... N.
\end{equation}
By the multi-agent advantage decomposition lemma \citep{kuba2022trustregion}, the sum of individual advantage corresponds exactly to the joint advantage $A_{\pi}^{i_{1:N}} \left( s, a^{i_{1:m}} \right) = \sum_{j=1}^{N} A_{\pi}^{i_j} \left( s, a^{i_{1:j-1}}, a^{i_j} \right)$.

The estimated agent-wise advantage gives us a per-agent notion of desirable behavior. An action is worth reinforcing when it improves upon what that agent would have done on average under the behavioral policy $\pi$, \ie, when $A_\pi^{i_m}>0$. This results in an agent-wise advantage filtered dataset:
\begin{equation}
\label{eq:positive_dataset}
\mathcal{D}^{+} = \Big\{ \left( s_t, a_t^{i_m} \right) \in \mathcal{D}_{\mathrm{offline}} \big| \;A^{i_m}_{\pi}(s_t, \mathbf a^{i_{1:m-1}}_t, a^{i_m}_t)>0\Big\} , \quad m = 1, 2, ... N .
\end{equation}
Afterwards, for each individuals, we perform offline fine-tuning on $\mathcal D^+$  using the standard conditional flow-matching objective in $\pi_0$ \citep{black2024pi0}:
$\mathcal{L}_{\mathrm{offline}} = \mathbb{E}_{\mathcal{D}^{+}} \left[ \mathcal{L}_{\mathrm{FM}} \left( \mathbf{\pi} \right) \right]$.
\begin{proposition}[Monotonic Policy Improvement]
Let $\pi^+ \in \argmin_{\pi} \mathbb{E}_{\mathcal{D}^{+}}[\mathcal{L}_{\mathrm{FM}} \left( \mathbf{\pi} \right)]$. Assume (1) the critic gives accurate advantages $A^{i_m}_\pi$ on the states visited by $\pi^+$, and $\pi^+$ operates in the support of $\mathcal D^+$, (2) an action kept by the filter has non-negative expected advantage when the preceding agents $i_{1:m-1}$ act according to $\pi^+$, then we have $J(\pi^+) - J(\pi)\geq 0$.
\end{proposition}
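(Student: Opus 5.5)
The proof combines the performance difference lemma with the multi-agent advantage decomposition lemma, applied along the fixed order $i_{1:N}$. First, for the discounted objective $J$ we have
\begin{equation}
J(\pi^+) - J(\pi) = \frac{1}{1-\gamma}\,\mathbb{E}_{s\sim d_{\pi^+}}\,\mathbb{E}_{\mathbf a\sim \pi^+(\cdot|s,\psi)}\left[A_\pi(s,\mathbf a)\right],
\end{equation}
where $d_{\pi^+}$ is the normalized discounted state-visitation distribution of $\pi^+$ started from $\rho_0$, and $A_\pi(s,\mathbf a)=Q_\pi(s,\mathbf a)-V_\pi(s)$ is the joint advantage of the behavioral policy. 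This is standard: telescope $V_\pi$ along trajectories of $\pi^+$ and use $Q_\pi(s,\mathbf a)=r+\gamma\mathbb E[V_\pi(s')]$. So it suffices to show that the inner expectation is non-negative at every state in the support of $d_{\pi^+}$.

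Fix such a state $s$. Apply the multi-agent advantage decomposition lemma with $i_{1:m}=i_{1:N}$. Because $Q_\pi^{i_{1:0}}=V_\pi$ and $Q_\pi^{i_{1:N}}=Q_\pi$, this gives $A_\pi(s,\mathbf a)=\sum_{m=1}^N A_\pi^{i_m}(s,\mathbf a^{i_{1:m-1}},a^{i_m})$. These are exactly the telescoping terms in the definition of the agent-wise advantage. The joint policy factorizes as $\pi^+=\prod_i \pi^{+,i}$, so by linearity
\begin{equation}
\mathbb{E}_{\mathbf a\sim\pi^+}\left[A_\pi(s,\mathbf a)\right]=\sum_{m=1}^N \mathbb{E}_{\mathbf a^{i_{1:m-1}}\sim\pi^{+,i_{1:m-1}}}\,\mathbb{E}_{a^{i_m}\sim\pi^{+,i_m}}\left[A^{i_m}_\pi(s,\mathbf a^{i_{1:m-1}},a^{i_m})\right].
\end{equation}
Each summand depends only on the marginals of the first $m$ agents in the order. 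This is why the sequential conditioning is taken along the same order $i_{1:N}$ used to build $\mathcal D^+$.

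Next, we show that each summand is non-negative. The flow-matching objective is a conditional generative-modelling loss, so its minimizer $\pi^{+,i_m}$ reproduces the per-agent conditional action distribution of $\mathcal D^+$; this is the sense in which assumption (1) says $\pi^+$ operates in the support of $\mathcal D^+$. Hence, almost surely, $a^{i_m}\sim\pi^{+,i_m}(\cdot|s)$ is an action that the filter kept at $s$. Assumption (2) states that such a kept action has non-negative expected advantage when the preceding agents act according to $\pi^+$. That is precisely the inner expectation of the $m$-th summand, so it is $\ge 0$. Assumption (1) also guarantees that the critic's advantages agree with the true $A^{i_m}_\pi$ on states visited by $\pi^+$, so the filter's decisions are made with the true advantage. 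Summing over $m$ and integrating against $d_{\pi^+}$ then yields $J(\pi^+)-J(\pi)\ge0$.

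The main obstacle is this last step. It links the minimizer of an imitation loss on filtered data to an expectation under $\pi^+$ at states that $\pi$ visited, not $\pi^+$. It also has to handle the fact that filtering was done with preceding actions drawn from $\pi$, while the relevant expectation draws them from $\pi^+$. I would not try to prove support coverage or exact distribution matching. Instead, I would state explicitly that the proof uses assumptions (1)–(2) to bridge exactly these two mismatches, the state-distribution shift via (1) and the conditioning shift via (2). The remainder of the argument is routine.
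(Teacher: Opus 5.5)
Your proposal follows the paper's proof essentially step for step: the performance-difference lemma, the multi-agent advantage decomposition along $i_{1:N}$, and then non-negativity of each agent-wise term, obtained from the exact flow-matching projection onto $\mathcal D^+$ together with assumptions (1)--(2). One small fix: assumption (2) bounds the expectation over the preceding agents for a fixed kept $a^{i_m}$, but in your display that expectation is the outer one, not the inner one, so first swap the order of integration (legitimate because $\pi^+$ factorizes across agents) and only then invoke (2).
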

\emph{Proof sketch.} By performance difference lemma \citep{kakade2002approximately} and multi-agent advantage decomposition lemma, the improvement over expected return is defined as the expectation over agent-wise advantage. Since actions keep a non-negative advantage with preceding agents follows $\pi^+$, the expected return cannot get worse. The proof details are in Appendix~\ref{sec:proof}.

\subsection{Online Latent-Space Reinforcement Learning}
\label{sec:online_latent_rl}

% While offline RL can outperform demonstrations by stitching suboptimal trajectories, it cannot query the environment to explore new behaviors outside the data support, which can be lifted via online RL. Note that directly applying online RL to flow-matching VLAs is challenging, since flow matching do not provide a tractable action likelihood and must be approximated \citep{zhang2026reinflow}. To our knowledge, $\pi_{RL}$ \citep{chen2026pirl} is the only method for flow-matching VLAs with open-source implementations.

Offline RL can exceed demonstration quality by stitching suboptimal trajectories, but it cannot interact with the environment and thus cannot discover behaviors beyond the data support. As such, we further tune multi-agent VLAs via online RL. Extending online RL to flow-matching VLAs is nontrivial, since flow matching yields no tractable action likelihood and the log-probability must instead be approximated. Consequently, existing works mainly operate at small policy scales \citep{zhang2026reinflow, xu2024rldg, luo2025serl}, with $\pi_{RL}$ \citep{chen2026pirl} the only VLA-scale research with publicly available implementation \citep{yu2026rlinf}.

However, while $\pi_{RL}$ works well for simple multi-agent tasks, the success do not extend to harder settings. We hypothesize the reason to noisy co-exploration and unstable policy updates. First, the exploratory noises in online RL fine-tuning consists of unstructured Gaussian noise injected to each agent independently. Such uncoordinated noise disrupts the cooperative behavior of the offline-tuned policies and can decrease the success rate below its pre-RL level. Second, the episode reward is sparse and distributed across both time horizon and denoising steps, yielding high-variance gradient updates. While this variance is tolerable in the single-agent setting, it is exacerbated in the multi-agent setting by non-stationarity. From agent $i$'s perspective, the policy changes of other agents are absorbed in environment dynamics, violating the stationarity assumption required by RL.

Guided by this diagnosis, we turn to online RL in the latent noise space of the flow-matching policy, and adopt DSRL \citep{wagenmaker2025dsrl} as our base algorithm. Formally, consider a policy $\pi(a|s)$, with agent index dropped for simplicity. $\pi$-series VLAs employ rectified flow \citep{liu2022rectified}, which is a linear ODE from noise $a^0 \sim \mathcal N(0, \mathbf I)$ to the final action output $a^1$ during inference, with a linear interpolation $a^\tau_t = \tau a^1_t + (1-\tau) a_t^0$, $\tau \in [0,1]$. Instead of tuning the rectified flow, DSRL tunes the latent noise $a^0$. In place of sampling from Gaussian distribution $\mathcal N(0, \mathbf I)$, they learn the latent noise from a policy $\phi(a^0|s)$ using online RL. This paradigm avoids tuning the weights of VLA and provides a lightweight solution since $\phi(a^0|s)$ can be a small network instead of large VLAs.

DSRL serves as a clean remedy against the problem of noisy co-exploration and unstable policy updates in multi-VLA. First, since the VLA is fixed, exploration is confined to the latent noise space, producing in-distribution, temporally coherent behaviors. Joint exploration thus preserves basic cooperation from offline fine tuning and results in non-degenerate success rate. Second, tuning in latent space avoids modeling the intractable marginal likelihood of flow matching, and constrains policies within the set of offline-tuned behaviors, lowering policy variance during update and thus bounding the non-stationarity during training.

\section{Experiments}
\textbf{Experimental setting.}
We evaluate our multi-agent VLA on a total of 11 cooperation tasks. To evaluate closely synchronized bimanual actions, we consider four tasks in RoboTwin \citep{chen2025robotwin2}, including \textit{Lift Pot}, \textit{Grab Roller}, \textit{Handover Mic}, and \textit{Put Bottles Dustbin}. To evaluate coordination of three or four agents, we consider four tasks in RoboFactory \citep{qin2025robofactory}, including \textit{Camera Alignment}, \textit{Take Photo}, \textit{Three Robots Stack Cubes}, and \textit{Long Pipeline Delivery}. 
We train low-level end-to-end VLAs for these tasks.
% \emph{Note that these environments use rule-based method for low-level control, while our method replace the low-level rule to end-to-end VLAs}. 
For real-world evaluation, we design three tasks that requires cooperation of two Franka robots, including \textit{Handover}, \textit{Lift}, and \textit{Handover \& Drop}. 

\textbf{Baselines.} Due to the scarcity of researches in RL methods for multi-agent VLAs, we use CHORUS~\citep{doshi2026chorus} as a baseline for supervised fine-tuning, and implement our data collection, offline RL and online RL on its checkpoint. For online stage, we additionally compare our choice of DSRL with Flow-SDE and Flow-Noise~\citep{chen2026pirl}, with results reported in ablations. Our backbones include $\pi_0$ \citep{black2024pi0} and $\pi_{0.5}$ \citep{physicalintelligence2025pi05}.

\textbf{Implementation Details.}
We adopt a decentralized execution setting similar to CHORUS~\citep{doshi2026chorus}. Each VLA receives its local observations including wrist-camera RGB images and proprioceptive states. All agents additionally share a third-person RGB observation and the language instruction. % We do not share parameters between VLAs since the task require different individual skills and our pilot study do not find consistent performance gains. 
For each simulation task, we collect 50 expert demonstrations for supervised fine-tuning and 30 demonstrations for real-world tasks. During the initialization-aware data collection stage, we evaluate 96 environment initialization configurations for each simulation task and 50 configurations for each real-world task, with $K=5$ the number of rollouts for each configuration in simulation and $K=1$ for real world. We evaluate each task using 100 rollouts in simulation and 50 rollouts in real-world. See additional implementation details in Appendix~\ref{sec:implementation_details}.

\subsection{Simulation Results}
We evaluate our multi-agent VLA on 8 simulation tasks in two environments. We include our initialization-aware data collection automatically and report our offline and online RL stages as \emph{+ Offline RL} and \emph{+ Online RL}, respectively. As shown in Fig. \ref{fig:sim_results}, considering learning stages, our method improves an average of $+8.1\%$ after offline RL and $+19.7\%$ after online RL. As for backbones, our method improves an average of  $+18.9\%$ on $\pi_0$ and $+20.6\%$ on $\pi_{0.5}$. As for environments, our method improves an average of $+23.1\%$ on bimanual RoboTwin tasks and $+16.4\%$ on RoboFactory with 3-4 agents. Together, the results show the effectiveness of our multi-agent VLA on both online and offline RL stages, different backbones, and varying agent numbers.

% We report the task success rates across all eight tasks in Fig.~\ref{fig:sim_results}. Our method achieves consistent performance improvements across different stages of the training pipeline. Specifically, CHORUS achieves average success rates of $42.6\%$ and $73.8\%$ on RoboTwin and RoboFactory, respectively. After \emph{Offline Credit-Filtered Tuning}, the average success rates increase to $51.4\%$ and $80.0\%$, and further improve to $65.8\%$ and $87.5\%$ after \emph{Online Latent-Space Reinforcement Learning}. Overall, our complete post-training pipeline improves the average success rate over the CHORUS baseline by $23.1\%$ and $13.8\%$ percentage points on RoboTwin and RoboFactory, respectively, demonstrating the effectiveness of our multi-stage training recipe.

% We evaluate our multi-agent VLA post-training framework on eight simulation tasks across two benchmarks and two VLA backbones. We first use CHORUS~\citep{doshi2026chorus} as the supervised fine-tuning baseline, trained on the collected expert demonstrations. Starting from the CHORUS models, we perform \emph{Initialization-Aware Data Collection} and subsequently apply \emph{Offline Credit-Filtered Tuning} ("+ Offline RFT" in Fig.~\ref{fig:sim_results}). The advantage threshold is chosen such that approximately 5\% of the original samples are retained after filtering, while human demonstrations are assigned an advantage of 1. Finally, we perform \emph{Online Latent-Space Reinforcement Learning} without CTDE or parameter sharing ("+ Offline RFT + Online RL" in Fig.~\ref{fig:sim_results}).

\begin{figure*}[t]
\centering
\includegraphics[width=1\textwidth]{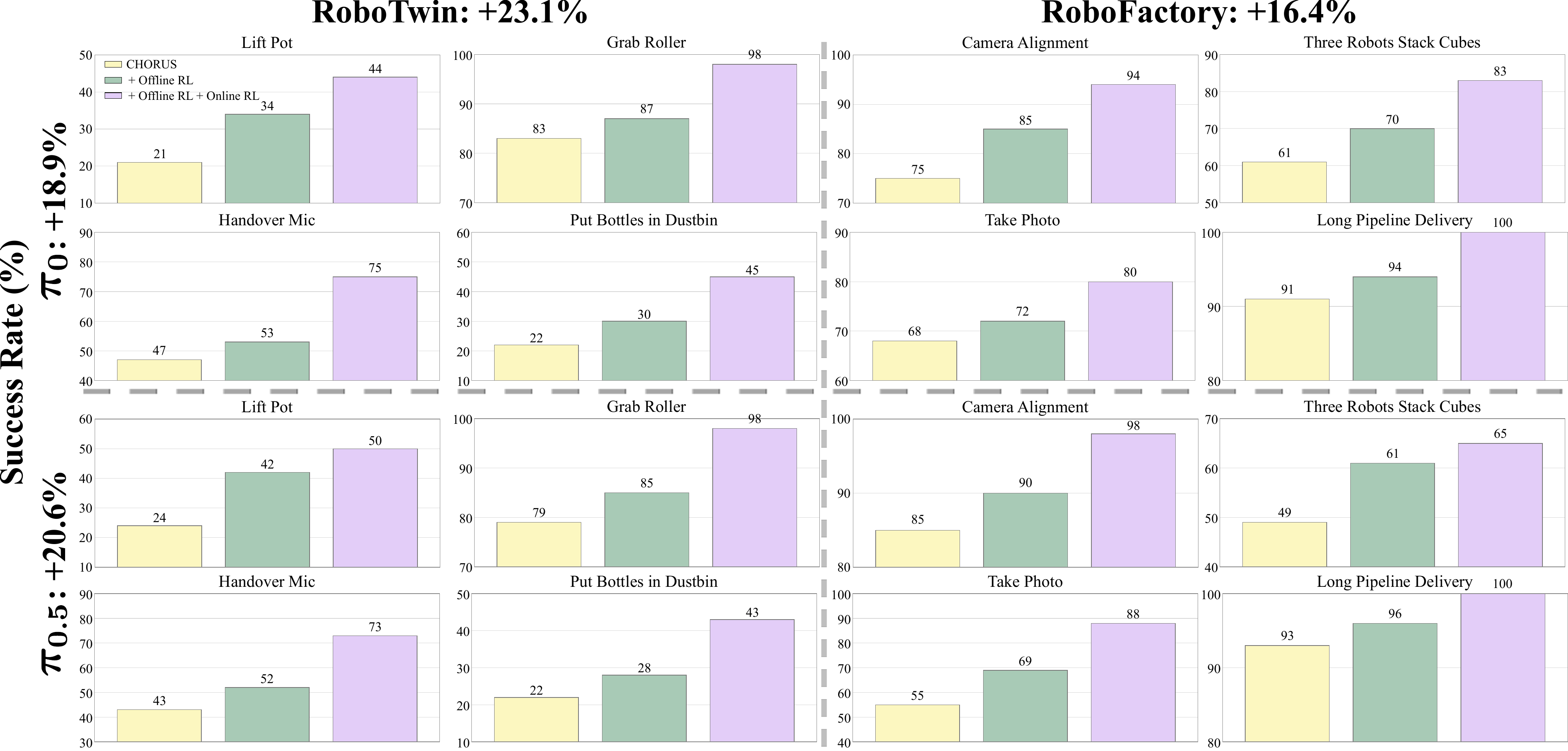}
\vspace{-0.2in}
\caption{Success rate (\%, $\uparrow$) under 8 different simulation tasks from RoboTwin and RoboFactory with backbone models $\pi_0$ and $\pi_{0.5}$.} % TODO: Legend. Pi05 Take Photo
\vspace{-0.1in}
\label{fig:sim_results}
\end{figure*}

\subsection{Ablations}

In this section, we provide extensive ablation on each state of our Multi-Agent VLA with visualization of the effectiveness of each parts. We select $\pi_{0.5}$ as backbone and consider four harder tasks that can not reliably solved by current methods to provide insights on potential on further gains.

% To examine whether the learned advantage provides meaningful agent-wise credit assignment, we visualize the predicted advantages for representative trajectories of the Lift Pot task. This task requires two robot arms to jointly lift a pot while maintaining similar gripper heights, making it particularly suitable for analyzing the contribution of each agent.

\begin{wraptable}{r}{0.60\textwidth}
    \centering
    \vspace{-0.2in}
    \caption{
        Ablation study on the initialization-aware data collection.
        We report success rate (\%, $\uparrow$).
    }
    \label{tab:ablatio_ia}
    \resizebox{\linewidth}{!}{
    \begin{tabular}{lccccc}
        \toprule
        \textbf{Task} & \textbf{CHORUS} & \makecell{\textbf{+ Offline RL} \\ \textbf{w/o IA}} & \makecell{\textbf{+Offline RL} \\ \textbf{+Online RL w/o IA}} & \textbf{+Offline RL} & \makecell{\textbf{+Offline RL} \\ \textbf{+Online RL}} \\
        \midrule
        \textit{Lift Pot} & 24 & 30 & 46 & 42 & \textbf{50} \\
        \textit{Handover Mic} & 43 & 46 & 63 & 52 & \textbf{73} \\
        \textit{Three Robots Stack Cubes}& 49 & 57 & 58 & 61 & \textbf{65} \\
        \textit{Take Photo} & 55 & 68 & 79 & 69 & \textbf{88} \\
        \midrule
        \textbf{Average} & 42.8 & 50.3 & 61.5 & 56.0 & \textbf{69.0} \\
        \bottomrule
    \end{tabular}
    }
    \vspace{-0.1in}
\end{wraptable}
\textbf{Ablation on initialization-aware data collection.} We show the effectiveness of our initialization via offline RL on alternative data without human demonstrations (\emph{+offline RL w/o init}). As shown in Table~\ref{tab:ablatio_ia}, adding human demonstrations provides better coverage, resulting in +5.7\% success rate on average. Additionally, we show initialization benefits further RL. In Fig.~\ref{fig:data_coverage}, we report the success rate on 25 randomly sampled initializations. Our data collection results in broader success rate on different initializations via offline RL, and further enables broader success coverage in subsequent online RL. The result states the importance of data coverage in multi-agent VLAs, and also points out the current limitation of VLAs in generalizing against various initializations and conditions.

% To evaluate the effectiveness of this component, we perform offline credit-filtered tuning on the CHORUS models without introducing additional human demonstrations to expand data coverage, denoted as '+RFT w/o IA'. As shown in Table.~\ref{tab:ablatio_ia}, our data collection scheme provides better coverage for all tasks, resulting in +5.7\% success rate on average. 
% Additionally, we examine the success rate of each method across 25 environment initializations, set up with the same random seeds for all methods. As shown in Fig.~\ref{fig:data_coverage}, initialization-aware data collection provides successful behaviors for challenging configurations, thereby enabling subsequent reinforcement learning. For example, among the five initializations shown in the first row, CHORUS achieves zero success under all of them. Even after extensive offline tuning without initialization-aware data collection (``+Offline RFT w/o IA''), the success rates remain low, as the limited autonomous exploration rarely discovers successful behaviors under these configurations. In contrast, by selectively introducing additional human demonstrations for these zero-success initializations, our data collection scheme enables the policy to acquire successful behaviors in three of the five previously unsolved configurations.

\begin{figure*}[t]
\centering
\includegraphics[width=1\textwidth]{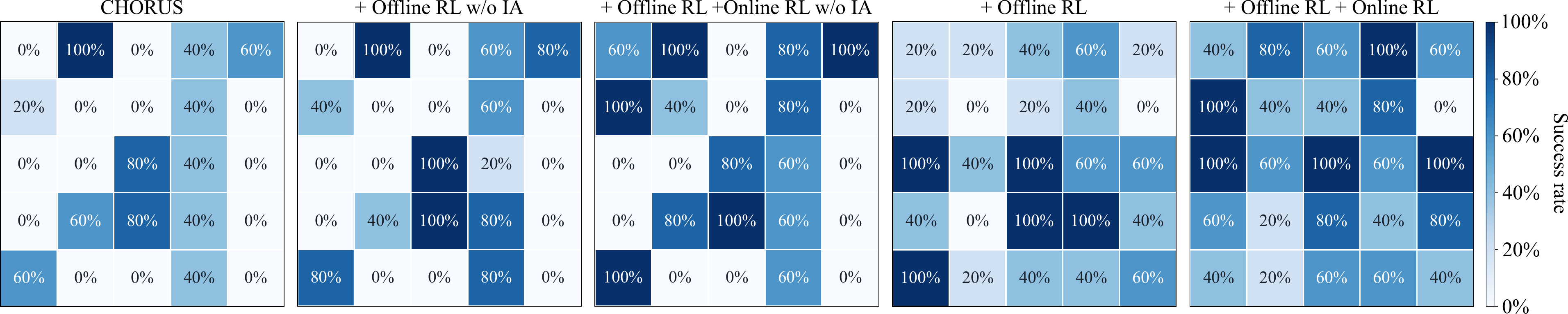}
\vspace{-0.2in}
\caption{Visualization of success rates under different environment initializations. \emph{Initialization-Aware Data Collection} identifies and compensates for challenging configurations in which agents struggle to acquire successful behaviors through autonomous exploration.}
\vspace{-0.1in}
\label{fig:data_coverage}
\end{figure*}

\begin{figure*}[t]
\centering
\includegraphics[width=1\textwidth]{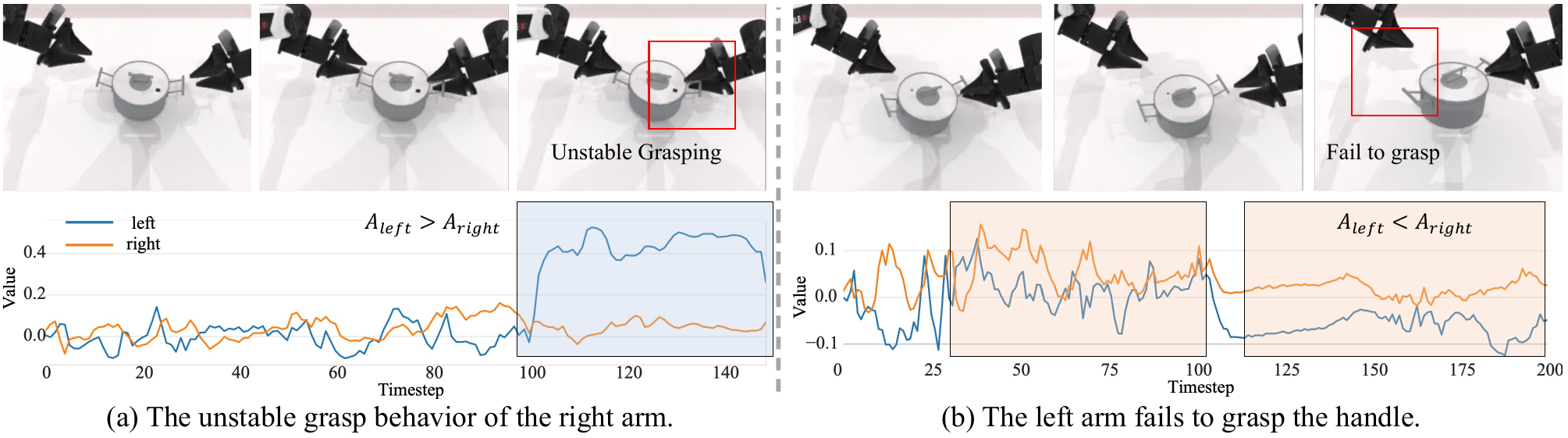}
\vspace{-0.2in}
\caption{Visualization of agent-wise credit assignments and the behaviors of them.}
\vspace{-0.2in}
\label{fig:visualization}
\end{figure*}

\textbf{Ablation on offline credit-filtered tuning.} As shown in Fig.~\ref{fig:sim_results}, offline RL provides an average +8.1\% success rate. We also visualize the agent-wise credit assignment in Fig.~\ref{fig:visualization}. In case (a), the advantage of right arm is lower than the left arm, corresponding to the unstable grasp behavior of the right arm. In case (b), the advantage of left arm is lower than the right arm, and in the video the left arm fails to grasp the handle, leading to task failure, showing the capability to attribute failure to each agents.

% . The visualization shows our method are able to attribute failure to different individuals.

% As shown in Fig.~\ref{fig:visualization}(a), when the task succeeds, both arms maintain similar gripper heights throughout most of the trajectory, and their advantages are predominantly positive. Toward the end of the trajectory, the right arm becomes less stable in grasping the pot, which is reflected by its lower advantage compared with the left arm. In Fig.~\ref{fig:visualization}(b), the right arm fails to securely grasp the handle, leading to task failure. Accordingly, its advantage remains substantially lower than that of the left arm. Importantly, the left arm still receives mostly positive advantages despite the failed episode, indicating that our critic correctly preserves its positive contribution rather than assigning the team-level failure to both agents. Fig.~\ref{fig:visualization}(c) exhibits the opposite case: the left arm fails to grasp the handle and receives a low advantage, whereas the right arm maintains a higher and predominantly positive advantage.

\begin{wraptable}{r}{0.5\textwidth}
    \centering
    \vspace{-0.2in}
    \caption{
        Comparison of different online RL methods.
        We report success rate (\%, $\uparrow$).
    }
    \label{tab:online_rl_comparison}
    \resizebox{\linewidth}{!}{
    \begin{tabular}{lcccc}
        \toprule
        \textbf{Task}
        & \textbf{Offline RL}
        & \textbf{+Flow-SDE}
        & \textbf{+Flow-Noise}
        & \textbf{+DSRL} \\
        \midrule

        \textit{Lift Pot}
        & 42 & 43 & 43 & \textbf{50} \\

        \textit{Handover Mic}
        & 52 & 56 & 62 & \textbf{73} \\

        \textit{Three Robots Stack Cubes}
        & 61 & 62 & 63 & \textbf{65} \\

        \textit{Take Photo}
        & 69 & 82 & 78 & \textbf{88} \\

        \midrule
        \textbf{Average}
        & 56.0 & 60.1 & 61.5 & \textbf{69.0} \\

        \bottomrule
    \end{tabular}
    }
\end{wraptable}
\textbf{Ablation on online latent-space reinforcement learning.} As shown in Fig.~\ref{fig:sim_results}, online RL is generally effective for multi-agent VLAs, improving +11.6\% on the offline tuned checkpoint. 
%Among online methods, DSRL works the best, surpassing Flow-Noise by xxx\% and Flow-SDE by xxx\% on average. We illustrate the lower success rate of flow-based methods in Fig. xxx. 
We further compare DSRL with Flow-Noise and Flow-SDE \citep{chen2026pirl} as alternative online RL methods. As shown in Table.~\ref{tab:online_rl_comparison}, DSRL achieves the highest success rate, outperforming Flow-Noise by +7.5\% and Flow-SDE by 8.9\% on average. To understand why Flow-based RL offers limited performance, we offer evidences to support our claim that RL methods for single-agent VLA cannot be transferred directly to multi-agent VLAs due to noisy co-exploration and unstable policy updates (Section \ref{sec:online_latent_rl}). To illustrate the effect of noisy co-exploration, we increase the exploratory noise of flow-based methods. As shown in Fig.~\ref{fig:noise_scale}, increasing the noise scale in Flow-SDE and Flow-Noise consistently reduces the success rate as policy explores. In contrast, DSRL operates in latent noise space initialized from $\mathcal{N}(0,I)$, which is identical to the original VLA setting and do not lose success rates. To illustrate the effect of unstable gradient updates, we illustrate learning dynamics of DSRL and flow-based methods in Fig. \ref{fig:training_curve}. We find flow-based noise can occasionally crash during training, while our DSRL improves performance in a stable way. Additionally, these experiments also calls for a multi-agent version of flow-matching algorithm for VLAs that handle the problem of noisy exploration and unstable gradients.

\begin{figure*}[t]
    \centering
    \begin{minipage}[t]{0.48\textwidth}
        \centering
        \includegraphics[width=\linewidth]{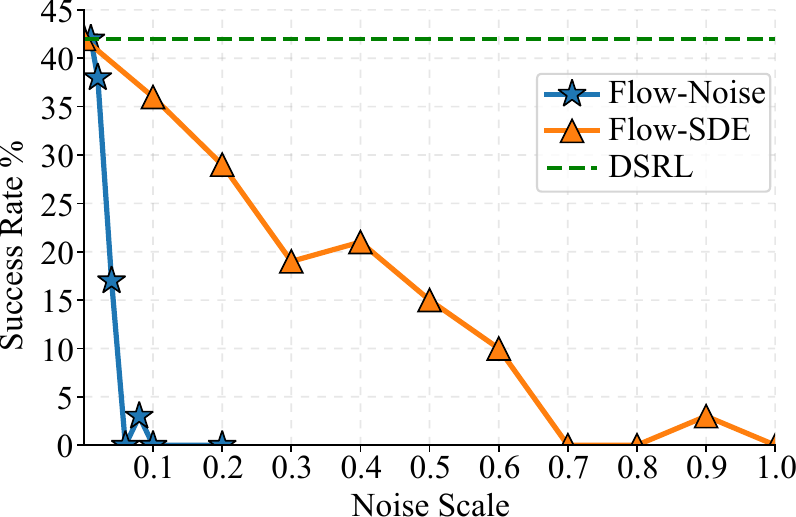}
        \vspace{-0.2in}
        \caption{The success rate under different noise scales of method Flow-Noise and Flow-SDE. Flow-based RL shows noisy co-exploration.}
        \label{fig:noise_scale}
    \end{minipage}
    \hfill
    \begin{minipage}[t]{0.48\textwidth}
        \centering
        \includegraphics[width=\linewidth]{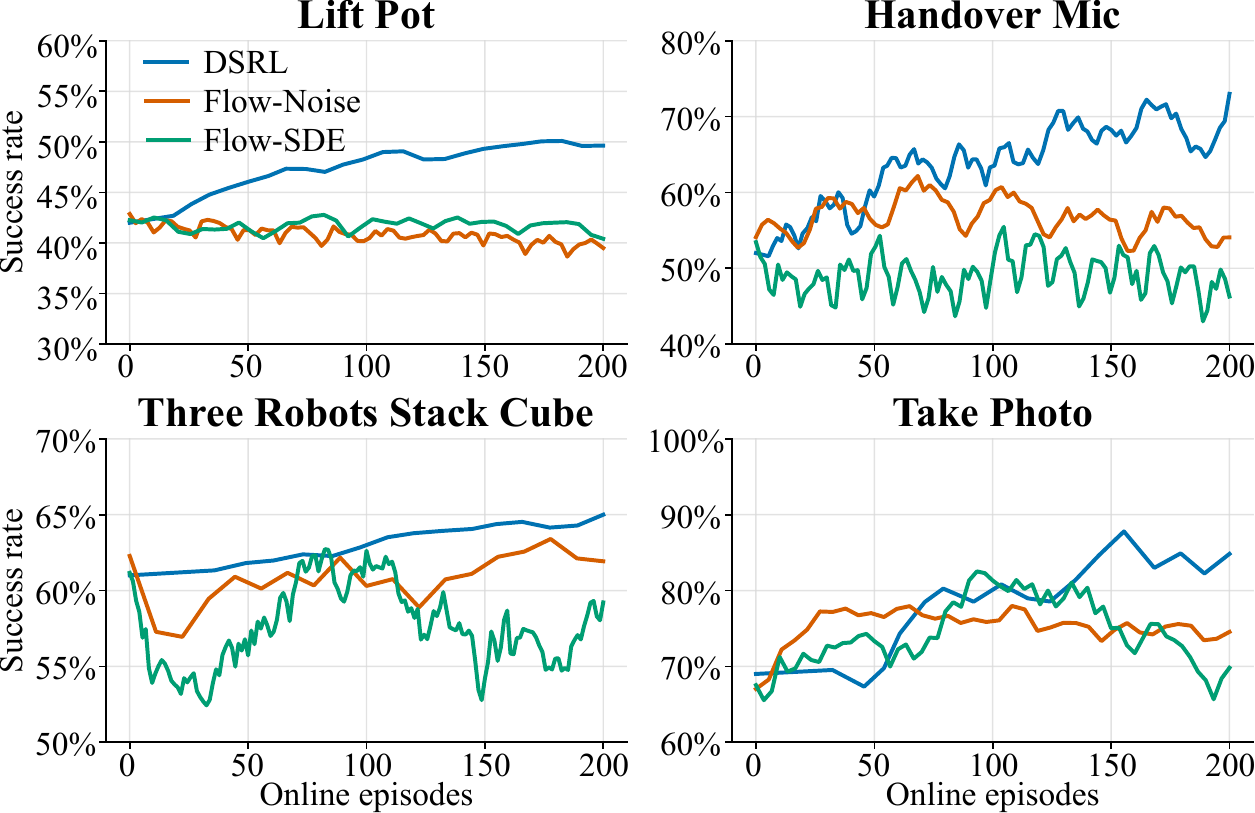}
        \vspace{-0.2in}
        \caption{Learning dynamics of online RLs. Flow-based RL shows unstable policy update.}
        \label{fig:training_curve}
    \end{minipage}
\end{figure*}

\subsection{Real-world Results}

Finally, we evaluate our Multi-Agent VLA in real-world settings. We consider $\pi_{0.5}$ backbone with three manipulation tasks that requires coordinating two Franka robots. In \emph{Handover}, one robot hands an object over to the other. In \emph{Lift}, two robots jointly grasp the handles of a basket and lift it. In \emph{Handover \& Drop}, one robot picks up a piece of trash and hands it to the other robot, which subsequently places it into a dustbin. We use the same three-stages pipeline in simulation, but change the online RL stage to DSRL with 50 real-world rollouts as a proof-of-concept result.

% See implementation details in Appendix~\ref{}.

% We use CHORUS~\citep{doshi2026chorus} as the supervised fine-tuning baseline, upon which we sequentially apply \emph{Offline Credit-Filtered Tuning} with \emph{Initialization-Aware Data Collection}, followed by \emph{Online Latent-Space Reinforcement Learning}. Implementation details are provided in Appendix~\ref{}.

\begin{figure*}[t]
\centering
\includegraphics[width=1\textwidth]{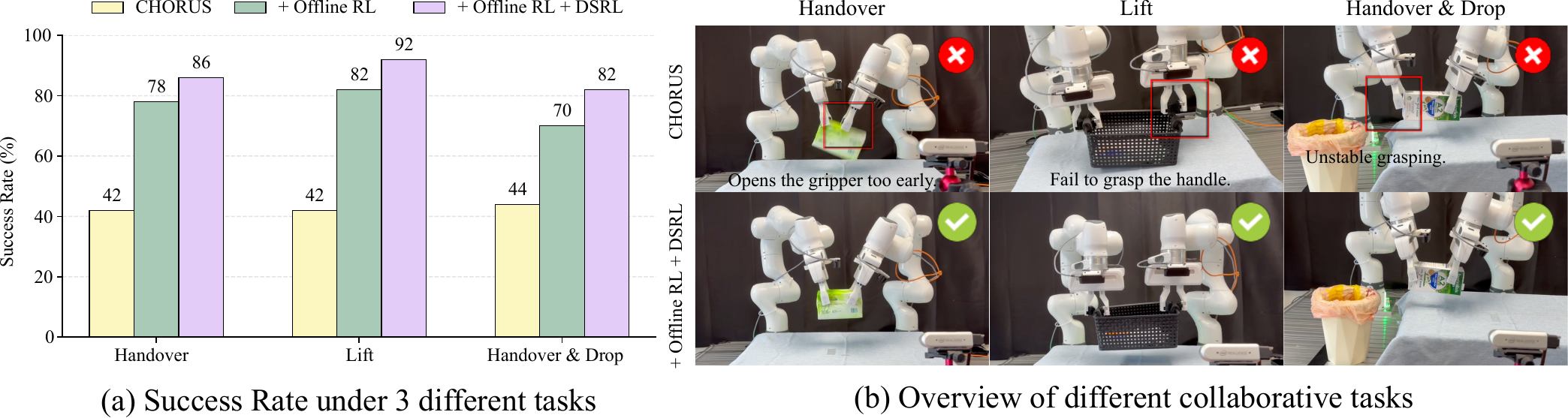}
\vspace{-0.2in}
\caption{The success rate (\%, $\uparrow$) and the overview under 3 different collaborative tasks on 2 Franka robots. Our training recipe effectively enhances the collaborative performance of multi-agent VLAs.} % TODO: Legend.
\vspace{-0.2in}
\label{fig:real_results}
\end{figure*}

The success rates can be seen in Fig.~\ref{fig:real_results}(a). Our multi-agent VLA stays consistently effective in real world, resulting in +34.0\% success rate by offline RL and +10.0\% by DSRL, with online RL improvements achieved by 50 real world rollouts only. We further provide a case study of our method and CHORUS in Fig.~\ref{fig:real_results}(b). The failure cases of CHORUS shows supervised fine tuning paradigm alone is insufficient to learn fine-grained VLA coordination. For \emph{Handover}, the left robot opens its gripper before the right robot securely grasps the object, causing it to fall onto the table. For \emph{Lift}, the right robot fails to securely grasp the basket handle before lifting. For \emph{Handover \& Drop}, the receiving robot fails to firmly grasp the trash handed over by the other robot, resulting in an unsuccessful handover. In contrast, our multi-agent VLA is free of these problems, exceeding the success rate of behavior cloning pipeline by large margins via reinforcing desired behaviors.

% improving average success rate xx\% by offline  RL 

% The results of the success rate can be seen in Fig.~\ref{fig:real_results}(a). After deploying offline finetuning, the success rate of three tasks yields an absolute improvement of x\%, y\% and z\% perspectively. Besides, after deploying online finetuning, the success rate yields furthermore improvement of x1\%, y1\% and z1\%, showing the effectiveness of our post-training pipeline.
% Fig.~\ref{fig:real_results}(b) further provides qualitative comparisons, where the first row shows representative episodes collected with CHORUS. For \emph{Handover}, the left robot opens its gripper before the right robot securely grasps the object, causing it to fall onto the table. For \emph{Lift}, the right robot fails to securely grasp the basket handle before lifting. For \emph{Handover \& Drop}, the receiving robot fails to firmly grasp the trash handed over by the other robot, resulting in an unsuccessful handover. In contrast, after applying our post-training recipe, the two Franka robots successfully coordinate to complete all three tasks, as shown in the second row.

\section{Conclusion}
In this work, we propose a three-stage reinforced fine-tuning pipeline for cooperative multi-agent vision-language-action models. First, initialization-aware data provide better distribution coverage by sweeping over initial configurations and invokes human demonstrations with zero success rates. Second, offline credit-filtered tuning rewards good desired individual behavior by VLA-level credit assignment, which assigns credit to individual agents and fine-tunes on per-agent trajectories with positive advantage rather than on entire joint rollouts. Third, online latent-space fine tuning freeze the VLA and perform RL in its latent noise space to reduce the challenge of noisy co-exploration and unstable gradient updates in online methods for VLAs. Experiments with $\pi_0$ and $\pi_{0.5}$ across 11 simulated and real-world tasks demonstrate consistent improvements, achieving gains of $23.1\%$, $16.4\%$, and $44\%$ on RoboTwin, RoboFactory, and real-world tasks, respectively.

\section*{AI use statement}
LLMs were employed for text polishing. The authors have thoroughly reviewed and validated all content presented in the paper.

\section*{Ethics statement}
Our work studies cooperation among multiple VLAs. The focus of this paper is to strengthen the ability of robot foundation models to coordinate with one another, thereby broadening their applicability to tasks such as collaborative assembly and multi-robot household assistance. To the best of our knowledge, our work raises no immediate ethical concerns. However, same as any works in robotics, deployment in real-world settings should be accompanied with appropriate safety safeguards and human oversight.

\section*{Reproducibility statement}
We provide implementation details in Appendix. \ref{sec:implementation_details}. Code is available at \url{https://anonymous.4open.science/r/mavla_rft-2BC0/}.

\bibliography{iclr2027_conference}
\bibliographystyle{iclr2027_conference}

\newpage
\appendix
{\LARGE\sc {Appendix for "Cooperative Multi-Agent Vision-Language-Action Models via Reinforced Fine Tuning"}\par}

\section{Pseudo code for our post-training pipeline.}
\label{sec:pseudo_code}

Algorithm~\ref{alg:multi_agent_vla} summarizes our three-stage reinforced fine-tuning pipeline, consisting of initialization-aware data collection, offline credit-filtered tuning, and online latent-space reinforcement learning.

\begin{algorithm}
\caption{Reinforced Fine-Tuning for Multi-Agent VLAs}
\label{alg:multi_agent_vla}
\begin{algorithmic}[1]

\Require Pretrained multi-agent VLA policies $\boldsymbol{\pi}$; environment initializations $\Xi$; number of rollouts $K$; advantage threshold $\epsilon$
latent policies $\boldsymbol{\phi}=\{\phi^i\}_{i=1}^{N}$
\Ensure Offline-tuned VLA policies $\boldsymbol{\pi}^{+}$ and latent-space policies $\boldsymbol{\phi=\{\phi^i\}_{i=1}^{N}}$

\State $\mathcal{D}_{\mathrm{offline}} \gets \emptyset$

\Statex
\Statex \textbf{Stage I: Initialization-Aware Data Collection}
\For{each initialization $\xi \in \Xi$}
    \State Collect $K$ rollouts $\mathcal{T}_{\xi} =\{\tau_{\xi}^{1},\ldots,\tau_{\xi}^{K}\}$ using $\boldsymbol{\pi}$
    \State Estimate empirical success rate $\mathrm{SR}(\xi)$
    \If{$\mathrm{SR}(\xi) > 0$}
        \State $\mathcal{D}_{\mathrm{offline}} \gets \mathcal{D}_{\mathrm{offline}} \cup \mathcal{T}_{\xi}$
    \Else
        \State Collect human demonstrations $\mathcal{D}_{\mathrm{human}}^{\xi}$ under initialization $\xi$
        \State $\mathcal{D}_{\mathrm{offline}} \gets \mathcal{D}_{\mathrm{offline}} \cup \mathcal{D}_{\mathrm{human}}^{\xi}$
    \EndIf
\EndFor

\Statex
\Statex \textbf{Stage II: Offline Credit-Filtered Tuning}
\State Compute discounted returns $G_t$
for trajectories in $\mathcal{D}_{\mathrm{offline}}$
\State Train the agent-combined critic
$Q_{\pi}^{i_{1:m}}$
for $m=0,\ldots,N$ using Eq.~\ref{eq:offline_value_loss}
\State $\mathcal{D}^{+} \gets \emptyset$
\For{each transition $(s_t,\mathbf{a}_t)$ in $\mathcal{D}_{\mathrm{offline}}$}
    \For{$m=1,\ldots,N$}
        \State Compute agent-wise advantage using Eq.~\ref{eq:agent_advantage}
        \If{$A_{\pi}^{i_m} > \epsilon$}
            \State $\mathcal{D}^{+} \gets \mathcal{D}^{+} \cup \{(s_t,a_t^{i_m})\}$
        \EndIf
    \EndFor
\EndFor
\State Fine-tune $\boldsymbol{\pi}$ on $\mathcal{D}^{+}$ with the flow-matching objective: $\boldsymbol{\pi}^{+} \gets \arg\min_{\boldsymbol{\pi}} \mathbb{E}_{\mathcal{D}^{+}} [\mathcal{L}_{\mathrm{FM}}(\boldsymbol{\pi})]$

\Statex
\Statex \textbf{Stage III: Online Latent-Space Reinforcement Learning}
\State Freeze the offline-tuned VLA policies
$\boldsymbol{\pi}^{+}$
\While{online RL has not converged}
    \State Reset the environment and observe
    $\{o_0^i\}_{i=1}^{N}$
    \For{$t=0,\ldots,T-1$}
        \For{each agent $i \in \mathcal{N}$}
            \State Sample latent noise $a_{t}^{i,0} \sim \phi^i(\cdot \mid o_t^i)$
            \State Generate action chunk using the frozen VLA: $a_t^i \sim \pi^{+,i}(\cdot \mid o_t^i,a_t^{i,0})$
        \EndFor

        \State Execute joint action $\mathbf{a}_t=(a_t^1,\ldots,a_t^N)$ and observe $r_t$ and next observations
        \State Store $(\mathbf{o}_t, \mathbf{a}_t^0, r_t, \mathbf{o}_{t+1})$ in the online replay buffer
    \EndFor
    \State Update $\boldsymbol{\phi}=\{\phi^i\}_{i=1}^{N}$ with DSRL using collected online experience
\EndWhile

\State \Return $\boldsymbol{\pi}^{+},\boldsymbol{\phi}$
\end{algorithmic}
\end{algorithm}

\section{Proof of Proposition 4.1}
\label{sec:proof}
We provide a theoretical justification for the monotonic improvement of
our offline credit-filtered tuning. Let
$\pi(\mathbf a\mid s)=\prod_{m=1}^{N}\pi^{i_m}(a^{i_m}\mid s)$
denote the behavioral policy that generates
$\mathcal D_{\mathrm{offline}}$, and let $\pi^+$ denote the policy
obtained by fine-tuning on the positive-advantage dataset
$\mathcal D^+$.

% We consider the state coverage of the filtered dataset,
% \begin{equation}
%     \mathcal S^+
%     =
%     \left\{
%     s:
%     (s,a^{i_m})\in\mathcal D^+
%     \text{ for some }a^{i_m},\ \forall m
%     \right\}.
% \end{equation}

Our analysis relies on the following assumptions:
(1) the critic gives accurate advantages $A^{i_m}_\pi$ on the states visited by $\pi^+$, and $\pi^+$ operates in the support of ${\mathcal D}^+$;
(2) an action kept by the filter has non-negative expected advantage when the preceding agents $i_{1:m-1}$ act according to $\pi^+$.

\begin{proposition}[Monotonic Improvement of Credit-Filtered Tuning]
\label{prop:monotonic_improvement}
Under the assumptions above, the policy $\pi^+$ obtained by
credit-filtered tuning satisfies
\begin{equation}
    J(\pi^+) - J(\pi) \geq 0.
\end{equation}
\end{proposition}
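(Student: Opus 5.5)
The plan combines the performance difference lemma \citep{kakade2002approximately} with the multi-agent advantage decomposition lemma, and then applies assumption (2) term by term. Let $d_{\pi^+}(s)=(1-\gamma)\sum_{t\ge0}\gamma^t \Pr(s_t=s\mid \pi^+)$ be the discounted state-visitation distribution of $\pi^+$. The performance difference lemma, applied to the joint policies $\pi^+$ and $\pi$ with the shared reward, gives
\begin{equation*}
J(\pi^+)-J(\pi)=\frac{1}{1-\gamma}\,\mathbb E_{s\sim d_{\pi^+}}\,\mathbb E_{\mathbf a\sim \pi^+(\cdot\mid s)}\big[A_\pi(s,\mathbf a)\big],
\end{equation*}
where $A_\pi(s,\mathbf a)=Q_\pi(s,\mathbf a)-V_\pi(s)$ is the joint advantage of the behavioral policy. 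This coincides with $A_\pi^{i_{1:N}}(s,\mathbf a^{i_{1:N}})$, since $Q_\pi^{i_{1:N}}=Q_\pi$ and $Q_\pi^{i_{1:0}}=V_\pi$.

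Next, the multi-agent advantage decomposition lemma (Eq.~\ref{eq:multi_agent_advantage_decomposition}) with the fixed order $i_{1:N}$ rewrites the integrand as $\sum_{m=1}^N A_\pi^{i_m}(s,\mathbf a^{i_{1:m-1}},a^{i_m})$. The joint policy factorizes as $\pi^+=\prod_m \pi^{+,i_m}$, so linearity of expectation gives
\begin{equation*}
J(\pi^+)-J(\pi)=\frac{1}{1-\gamma}\sum_{m=1}^N \mathbb E_{s\sim d_{\pi^+}}\,\mathbb E_{\mathbf a^{i_{1:m-1}}\sim \pi^{+,i_{1:m-1}}}\,\mathbb E_{a^{i_m}\sim \pi^{+,i_m}}\big[A_\pi^{i_m}(s,\mathbf a^{i_{1:m-1}},a^{i_m})\big].
\end{equation*}
Note that the advantages are still those of the old policy $\pi$. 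Only the sampling distributions are those of $\pi^+$, and this is exactly the form that assumption (2) addresses.

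It then remains to show that each summand is non-negative. By assumption (1), on states visited by $\pi^+$ the critic's advantages equal the true $A_\pi^{i_m}$, and $\pi^+$ puts its mass within the support of $\mathcal D^+$. Since $\pi^+$ minimizes the flow-matching loss on $\mathcal D^+$, $\pi^{+,i_m}(\cdot\mid s)$ is supported on actions $a^{i_m}$ that the filter kept at $s$, which are those with $A_\pi^{i_m}>0$ (or $>\epsilon\ge 0$) in the data. Assumption (2) states that such kept actions retain non-negative expected advantage even when the preceding agents' actions $\mathbf a^{i_{1:m-1}}$ are drawn from $\pi^{+}$ rather than from the data. Hence the inner expectation over $\mathbf a^{i_{1:m-1}}$ and $a^{i_m}$ is $\ge 0$ for every visited $s$. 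Averaging over $d_{\pi^+}$ and summing over $m$ gives $J(\pi^+)-J(\pi)\ge 0$.

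The main obstacle is this last step. The filter certifies positivity only for the preceding agents' actions $\mathbf a_t^{i_{1:m-1}}$ as logged under $\pi$, whereas the decomposition evaluates $A^{i_m}_\pi$ with the preceding agents following $\pi^+$. Closing this mismatch is precisely what assumption (2) is for, so I would invoke it directly rather than attempt to derive it. The remaining care goes into making the support statement in assumption (1) precise enough that "$\pi^{+,i_m}$ samples only kept actions" is justified on every state in the support of $d_{\pi^+}$. I would phrase this as $\pi^{+,i_m}(\cdot\mid s)$ being supported on $\{a:(s,a)\in\mathcal D^+\}$ for all $s$ with $d_{\pi^+}(s)>0$.
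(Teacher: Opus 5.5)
Your proposal is correct and follows essentially the same route as the paper's proof. You combine the performance difference lemma, the multi-agent advantage decomposition under the fixed order, and linearity of expectation over the factorized $\pi^+$, and then use assumptions (1)--(2), with the exact-projection reading of the flow-matching minimizer, to make each agent-wise term non-negative.
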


By the performance-difference lemma,
\begin{equation}
\label{eq:performance_difference}
J(\pi^+)-J(\pi)
=
\frac{1}{1-\gamma}
\mathbb E_{
    s\sim d^{\pi^+},
    \mathbf a\sim\pi^+
}
\left[
    A_\pi(s,\mathbf a)
\right].
\end{equation}
Using the multi-agent advantage decomposition lemma
\citep{kuba2022trustregion}, the joint advantage can be decomposed as
\begin{equation}
\label{eq:joint_advantage_decomposition}
A_\pi(s,\mathbf a)
=
\sum_{m=1}^{N}
A_\pi^{i_m}
\left(
s,
\mathbf a^{i_{1:m-1}},
a^{i_m}
\right).
\end{equation}
Therefore,
\begin{equation}
\begin{aligned}
J(\pi^+)-J(\pi)
=
\frac{1}{1-\gamma}
\mathbb E_{s\sim d^{\pi^+}}
\sum_{m=1}^{N}
\mathbb E_{\mathbf a\sim\pi^+}
\Big[
A_\pi^{i_m}
(
s,\mathbf a^{i_{1:m-1}},a^{i_m}
)
\Big].
\end{aligned}
\end{equation}

By construction, credit-filtered tuning retains only actions with
positive agent-wise advantage. Under the exact projection and the assumption that an action kept by the filter has non-negative expected advantage with $\mathbf a^{i_{1:m-1}} \sim \mathbf \pi^{+,i_{1:m-1}}$, $\pi^{+,i_m}$ is therefore supported only on
actions satisfying
\begin{equation}
\mathbb E_{\mathbf a\sim\pi^+} [
A_\pi^{i_m}
(
s,\mathbf a^{i_{1:m-1}},a^{i_m}
)]
\geq 0
\end{equation}
for states covered by $\mathcal D^+$. Hence, every agent-wise
expectation in the above decomposition is non-negative. Since
 $\pi^+$ operates in the support of ${\mathcal D}^+$, summing over agents and integrating over
the state-visitation distribution gives
\begin{equation}
J(\pi^+) - J(\pi) \geq 0,
\end{equation}
which proves the proposition.

\textbf{Remark.} 
(1) The coverage requirement in Assumption 1 is consistent with our initialization-aware data collection, which supplements policy rollouts with human demonstrations for zero-success initializations, thereby broadening the coverage of successful behaviors and reducing the need for extrapolation.
(2) Assumption 2 is plausible when fine-tuning induces moderate policy changes and retained actions have a positive advantage margin. Since fine-tuning starts from $\pi$ and reinforces selected behaviors from the collected data, these behaviors may remain beneficial under the updated policies of preceding agents, provided that the resulting distribution shift is not large enough to reverse their expected advantages.

\section{Implementation Details}
\label{sec:implementation_details}
\subsection{Decentralized execution}
We adopt a decentralized execution setting similar to CHORUS~\citep{doshi2026chorus}. Each VLA receives its local observations, including wrist-camera RGB images, proprioceptive states, and language instructions, and outputs actions only for the end effector of its corresponding robot. Different from CHORUS, where each agent operates solely on its local observations, we additionally provide all agents with a shared third-person RGB observation and the language instruction, since wrist-camera observations alone provide limited global context for effective multi-agent coordination. %We do not share parameters between VLAs since the task require different individual skills and our pilot study do not find consistent performance gains.
As for the language instruction, for RoboTwin~\citep{chen2025robotwin2}, we use the shared instruction provided by the RoboTwin itself. For RoboFactory and real-world tasks, we use the instruction as below: 
\begin{lstlisting}[style=promptstyle, caption={Language provided to the agents in different tasks.}]
RoboFactory Simulation:

Camera Alignment: 
Agent0:  "Grasp one side of the camera, lift the camera and align the camera with the steak.",
Agent1:  "Grasp one side of the camera, lift the camera and align the camera with the steak.",
Agent2:  "Grasp the steak, lift the steak and stay still.".

Three Robots Stack Cubes:
Agent0:  "Lift the blue cube, place the blue cube to the target and stay still.",
Agent1:  "Lift the green cube, stay still and place the green cube on the blue one.",
Agent2:  "Lift the red cube, stay still and place the red cube on the green one.".

Take Photo:
Agent0:  "Grasp one camera handle, lift one camera handle, and stay still.",
Agent1:  "Grasp one camera handle, lift one camera handle, and stay still.",
Agent2:  "Grasp the steak, lift the steak align the camera, and stay still.",
Agent3:  "After the steak align the camera, press the camera shutter.".

Long Pipeline Delivery:
Agent0:  "Grasp the shoe and place it to the target.",
Agent1:  "Grasp the shoe and pass it to the next robotic arm.",
Agent2:  "Grasp the shoe and pass it to the next robotic arm.",
Agent3:  "Grasp the shoe and pass it to the next robotic arm.".

Real-world tasks (shared instructions):
Handover:
Use the left arm to grasp the green-and-white tissue pack, hand it over to the right arm, and place it down on the table.

Lift:
Use both arms to grasp opposite sides of the black plastic mesh storage basket and lift it up.

Handover & Drop:
Use the right arm to grasp the empty a2 milk carton, hand it over to the left arm, and drop it into the white trash bin on the left.
\end{lstlisting}

\subsection{Training details}
For each simulation task, we collect 50 expert demonstrations for supervised fine-tuning and 30 demonstrations for real-world tasks. We trained the CHORUS models based on these demonstrations. During the initialization-aware data collection stage, we evaluate 96 environment initialization configurations for each simulation task, with $K=5$ the number of rollouts for each configuration. While in real-world tasks, we evaluate 50 environment initialization configurations with $K = 1$. % Different from the simulation, we do not collect the whole trajectories of human demonstrations. We allow a human operator to intervene and take over control when an imminent task failure is identified, following a setting similar to DAgger~\citep{ross2010DAgger,zhong2026egosteer}. 

During offline credit-filtered tuning, we use the collected rollouts to train the value function. We adopt $\gamma=1$ to calculate the discounted retures. We use a sparse reward with a step penalty of $-1$. At the final timestep of each episode, the agent receives a reward of $0$ if the task succeeds and a penalty of $-1000$ if the task fails. The architecture of value function largely follows that of $\pi_0$, where agent actions are encoded in the same manner as the noisy actions in $\pi_0$. We additionally introduce a value-expert head that discretizes the Monte Carlo return into 201 bins for value prediction. The value function is trained following the procedure of $\pi_{0.6}^*$~\citep{physicalintelligence2025pistar06}. For the agent ordering $i_1,\ldots,i_{\mathcal N}$, we use a fixed order throughout training and evaluation. Specifically, we use the order 'left, right' for RoboTwin and real-world tasks, while for RoboFactory, we follow the agent ordering defined in the original dataset~\citep{qin2025robofactory}. We then compute the agent-wise advantages and filter out samples whose advantages fall below a threshold $\epsilon$. We choose $\epsilon$ such that approximately 5\% of the original data is retained for subsequent fine-tuning. We add all of the frames from human demonstrations to $\mathcal D^+$. As for the finetuning of the CHORUS model, offline-tuned VLA policies, and the value function, we follow the pipeline similar to $\pi_0$~\citep{black2024pi0} and $\pi_{0.5}$~\citep{physicalintelligence2025pi05}, with the training settings shown in Tab.~\ref{tab:training_settings_vla_value}:
\begin{table}[H]
    \centering
    \vspace{-0.2in}
    \caption{Training settings of the VLA policies and value functions.}
    \label{tab:training_settings_vla_value}
    \begin{tabular}{lc}
        \toprule
        \textbf{Setting} & \textbf{Value} \\
        \midrule
        \textbf{Batch size}     & 32 \\
        \textbf{Training steps} & 30,000 \\
        \textbf{VLM}            & Full-parameter \\
        \textbf{Action expert}  & Full-parameter \\
        \textbf{Chunk horizon}  & 50 for simulation and 20 for real-world \\
        \bottomrule
    \end{tabular}
    \vspace{-0.2in}
\end{table}

For online latent-space reinforcement learning, we train for 200 exploration episodes and select the checkpoint with the best performance for evaluation in simulation. As for the real-world tasks, we change the online RL stage of DSRL to offline version with 50 real-world rollouts as proof-of-concept result. The parameters of DSRL are shown in Tab.~\ref{tab:dsrl_hyperparameters}. The parameters of Flow-Noise and Flow-SDE are shown in Tab.~\ref{tab:rlinf_hyperparameters}.

For evaluation, we evaluate each task using 100 rollouts in simulation and 50 rollouts in the real world. In simulation, we use an action chunk size of 50, with execution horizons of 50 and 20 for RoboTwin and RoboFactory, respectively. For real-world deployment, we employ Real-Time Chunking (RTC)~\citep{black2025rtc} for smooth and responsive VLA control, with an action chunk size of 50, an execution horizon of 8, and an overlap of 12 actions between consecutive action chunks.

\begin{table}
    \centering
    \caption{Training hyperparameters of DSRL.}
    \label{tab:dsrl_hyperparameters}
    \resizebox{\textwidth}{!}{
    \begin{tabular}{lc|lc|lc}
        \toprule
        \textbf{Hyperparameter} & \textbf{Value} &
        \textbf{Hyperparameter} & \textbf{Value} &
        \textbf{Hyperparameter} & \textbf{Value} \\
        \midrule

        Batch size              & 16
        & Actor lr              & $1\times10^{-4}$
        & Critic lr             & $3\times10^{-4}$ \\

        Training episodes       & 200
        & Temperature lr        & $3\times10^{-4}$
        & UTD ratio             & 1 \\

        Hidden dimensions       & $(128,128,128)$
        & Latent dimension      & 50
        & Discount factor       & 0.999 \\

        Target update rate      & 0.005
        & Number of Q-functions & 10
        & Critic reduction      & Mean \\

        CNN features            & $(32,32,32,32)$
        & CNN strides           & $(2,1,1,1)$
        & CNN padding           & VALID \\

        Encoder type            & Small
        & Encoder normalization & Group
        & Bottleneck            & True \\

        Spatial softmax         & True
        & Dropout rate          & 0.0
        & Target entropy        & Auto \\

        Action magnitude        & 1.0
        & Number of cameras     & 1
        & PopArt                & False \\

        \bottomrule
    \end{tabular}}
\end{table}

\begin{table*}
    \centering
    \caption{Training hyperparameters of Flow-SDE and Flow-Noise.}
    \label{tab:rlinf_hyperparameters}
    \resizebox{\textwidth}{!}{
    \begin{tabular}{lc|lc|lc}
        \toprule
        \textbf{Hyperparameter} & \textbf{Value} &
        \textbf{Hyperparameter} & \textbf{Value} &
        \textbf{Hyperparameter} & \textbf{Value} \\
        \midrule
        Training episodes & 200 &
        Advantage type & embodied\_grpo &
        Train expert only & True \\

        Top-k & 50 &
        Loss type & embodied\_grpo &
        Reward type & chunk\_level \\

        Parallel Num & 8 &
        PPO clip high & 0.2 &
        Separate models & True \\

        Discount factor & 0.99 &
        PPO clip low & 0.2 &
        Action chunks & 50 \\

        Chunk steps & 16 &
        PPO clip c & 3.0 &
        Flow steps & 10 \\

        Eval chunk steps & 16 &
        Actor lr & $5\times10^{-6}$ &
        Noise level & \makecell{0.1 (Flow-SDE) \\ 0.01 (Flow-Noise)} \\

        Normalize advantages & True &
        Micro-batch size & 16 &
        Joint log-prob & True \\

        KL coefficient & 0.0 &
        Global batch size & 128 &
        Train temperature & 1.0 \\

        Entropy bonus & 0 &
        Max grad norm & 2.0 &
        Eval temperature & 0.6 \\
        \bottomrule
    \end{tabular}
    }
\end{table*}

\end{document}